\newtheorem{corollary}{Corollary}
\newtheorem{lemma}{Lemma}
\newtheorem{proposition}{Proposition}
\newtheorem{remark}{Remark}
\newtheorem{definition}{Definition}
\DeclareMathOperator*{\argmax}{arg \, max}
\newcommand\given[1][]{\mid}
\begin{document}

\title{Estimating Individual Treatment Effects through Causal Populations Identification}

\author{C\'eline Beji$^1$, Eric Benhamou$^{1,2}$, Micha\"el Bon$^3$, Florian Yger$^1$, Jamal Atif$^1$
%
%
\vspace{.3cm}\\
%
1- Paris-Dauphine University - PSL, LAMSADE, CNRS, MILES \\
Place du Mar\'echal de Lattre de Tassigny, 75016 Paris - FRANCE
%
\vspace{.05cm}\\
2- Ai Square Connect \hspace{3cm}
3- AdWay, Groupe Square\\
}

\maketitle
\footnote{ESANN 2020 proceedings, European Symposium on Artificial Neural Networks, Computational Intelligence
and Machine Learning. Bruges (Belgium), 22-24 April 2020.}

\begin{abstract}

Estimating the Individual Treatment Effect from observational data, defined as the difference between outcomes with and without treatment or intervention, while observing just one of both, is a challenging problems in causal learning.
In this paper, we formulate this problem as an inference from hidden variables and enforce causal constraints based on a model of four exclusive causal populations. We propose a new version of the EM algorithm, coined as Expected-Causality-Maximization (ECM) algorithm and provide hints on its convergence under mild conditions. We compare our algorithm to baseline methods on synthetic and real-world data and discuss its performances. 


\end{abstract}

\section{Introduction}
Estimating \emph{Individual Treatment Effect} (ITE) from observational data is central in many application domains. For instance in healthcare, where the treatment is a proper medical treatment and the desired effect is the recovery of the patient. Being able to target accurately and demonstrably the population responding to a treatment has strong beneficial consequences in terms of public health by boosting personalized medicine. That would indeed allow a precise distribution of drugs to the profile of patients they can address, whereas at present, such drug may be prohibited because it does not demonstrate a positive effect at the whole population level through the lens of current standard testings.
In the vein of the Rubin's causality framework~\cite{Rubin_1974,Imbens1997}, we cast this counterfactual learning problem as an inference problem with incomplete data. We consider $X$ the $\mathcal{X}$-valued random variable ($\mathcal{X}\subseteq {\mathbb{R}}^d$) representing the features of an individual and $T$ the treatment assignment binary indicator stating whether the treatment was assigned ($T=1$) or not ($T=0$).
We denote $Y_i(1)$ the binary outcome that would be observed if we assigned the treatment to individual $i$ and $Y_i(0)$ the one that would be observed if we did not (e.g. $Y_i(1) = 1$ meaning that an effect was observed after treating individual $i$).
ITE of individual $X_i=x$ is defined as the conditional mean difference in potential outcomes,
$\tau(x)=\mathbb{E}[Y_i(1)-Y_i(0)\given X_i=x]$.
The fundamental problem is that for any individual $i$, we only observe the \emph{factual outcome} $Y_i(t)$ corresponding to the outcome of the assignment, whereas the \emph{counterfactual} $Y_i(1-t)$ remains unknown~\cite{Imbens1997}.
As summarized in Table~\ref{tab:Behaviors-probabilities}, from each couple $Y_i=\{Y_i(0),Y_i(1)\}$, called the \emph{potential outcome}, we can define four mutually exclusive categories of response to the treatment~\cite{Wasserman2004}:
responders who display a positive outcome only when treated, anti-responders who display a positive outcome only when they are \emph{not} treated, doomed and survivors, who respectively never and always display a positive outcome. 

 \begin{table*}[!ht]
  \centering
\resizebox {12cm} {!} {
\begin{tabular}{ c | c  |c  |c }
  Responder (R) & Doomed (D)  & Survivor (S) & Anti-responder (A) \\
 \hline 
 $\{Y(1)=1,Y(0)=0\}$ & $\{Y(1)=0,Y(0)=0\}$ & $\{Y(1)=1,Y(0)=1\}$ & $\{Y(1)=0,Y(0)=1\}$ \\
\end{tabular}
    }
\caption{Potential outcome of each causal population 
}
 \label{tab:Behaviors-probabilities}
\end{table*}

Based on this typology of behaviors, we write the counterfactual learning problem as a parametric estimation of latent variables constrained by the causal groups properties. In this holistic approach, not only do we efficiently evaluate the ITE, but we are able to identify under mild assumptions the causality classes.

\section{Related Work}

\label{section:related work}
The literature on causal inference is abundant, and it is beyond the scope of this paper to cover it exhaustively, although \cite{Pearl_2009} is a good reference for a broad overview of this topic. Within Rubin's framework, baseline approaches consist in using treatment as a feature, or in learning two independent classifiers on the control and on the test datasets. This latter approach has the advantage of simplicity and versatility, but may lead to a selection bias. To overcome this problem, more sophisticated methods have been proposed. A first group on method consists in adaptations of classical machine learning methods. Examples are: (i) an SVM-like approach \cite{zaniewicz2013support} where two hyperplanes are introduced and properly optimized to separate class behaviors.
    (ii) a parametric Bayesian method \cite{alaa2017bayesian} for learning the treatment effects using a multi-task Gaussian process.
    (iii) several random forest-based approaches with  split criteria specifically adapted to the problem~\cite{wager2018estimation}.
Deep learning methods have also been put to good use with examples such as:
(i) a deep neural network architecture \cite{shalit2017estimating} able to learn classifiers on the test and control populations while enforcing the minimization of an integral probability metric between the distributions of these classifiers. This work builds upon \cite{johansson2016learning} where counterfactual inference has been tackled from the perspective of domain adaptation and representation learning. 
   (ii) A number of methods using neural networks have also recently emerged~\cite{yoon2018ganite,louizos2017causal}.
Interestingly, when the test and control populations have the same size, it is shown in~\cite{Jaskowski2012} that a variable change could be used, leading to the estimation of a unique probability distribution (allowing the straightforward use of classical methods).

Our approach is different from the ones above. In our case, with binary treatment and outcome, the population has a clear causal structure. We use this fact and model the population as a mixture of four causal groups. Then, from the general knowledge of the causal structure obtained by our method, we can derive the ITE and thus compare our results with ITE-specific methods.

\section{A Parametric Model for Causal Populations}
\label{sec:paramModel}
We model the whole population as a mixture of mutually exclusive causal groups coined as responders (R), doomed (D), survivors (S) and anti-responders (A). We denote their respective distributions as 
$\{f_k(. | \theta_k) \}_{k \in \{R,D,S,A\}}$, and $\pi_k$ their mixing probability.

For an individual, the specific group to which he belongs is determined by his outcome with and without treatment. Since we can never observe both simultaneously, we introduce $Z_i=\{z_{ik}\}_{k \in \{R,D,S,A\}}$ the discrete latent variable that represents the class probability of individual $i$. 


Our model implies several constraints on the distribution of this latent variable, obviously excluding two causal populations according to the factual outcome and the assigned treatment. For example, it appears from Table~\ref{tab:Behaviors-probabilities} that an individual $i$ with $Y_i(0)=0$ cannot be a survivor or an anti-responder. The probability distribution of these two classes can then be set to zero ($z_{iS}=z_{iA}=0$). Similar constraints can be applied for every value of the factual outcomes and are summarized in Table~\ref{tab:causality_constraints}. Our goal is to estimate the latent distribution, from which we can in particular derive the ITE.

 \begin{table*}[!ht]
  \centering
\resizebox {12cm} {!} {
    \begin{tabular}{p{2cm}||c|c|c|c}
  &$Y_i(0)=0$ & $Y_i(0)=1$ & $Y_i(1)=0$  & $Y_i(1)=1$ \\
 \hline
  Causality constraints &
 $ \left\{
    \begin{aligned}
        z_{iS} = z_{iA} = 0 \\
        z_{iR} + z_{iD} = 1
    \end{aligned}
\right. $
& $ \left\{
    \begin{aligned}
        z_{iR} = z_{iD} = 0 \\
        z_{iS} + z_{iA} = 1
    \end{aligned}
\right. $
& $ \left\{
    \begin{aligned}
        z_{iR} = z_{iS} = 0 \\
        z_{iD} + z_{iA} = 1
    \end{aligned}
\right. $
& $ \left\{
    \begin{aligned}
        z_{iD} = z_{iA} = 0 \\
        z_{iR} + z_{iS} = 1
    \end{aligned}
\right. $
\\
\end{tabular}
    }
\caption{The causality constraints ($C^*$)}
 \label{tab:causality_constraints}
\end{table*}




\begin{proposition}\label{prop:ITE}
Knowing the latent distribution 
, ITE writes as
\begin{equation}
    \label{eq:ITE}
    \tau(x) = (l_R(x)+l_S(x)) \mathbb{E}[\mathds{1}_{Y_i(1)=1}]
	  - (l_S(x)+l_A(x))
	  \mathbb{E}[\mathds{1}_{Y_i(0)=1}]
\end{equation}
where $l_C(x)= \frac{\pi_C f_C(X_i=x|\theta_C)}{\sum_{G\in \{R,D,S,A\}} \pi_G f_G(X_i=x|\theta_G)}$, $C \in \{R,D,S,A\}$.
\end{proposition}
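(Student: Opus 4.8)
The plan is to derive~\eqref{eq:ITE} by rewriting $\tau(x)=\E[Y_i(1)-Y_i(0)\given X_i=x]$ entirely in terms of the latent class $Z_i$ and then collapsing the four–term expansion into two using the constraints of Table~\ref{tab:causality_constraints}. Since the potential outcomes are binary, I would first split $\tau(x)=\E[\mathds{1}_{Y_i(1)=1}\given X_i=x]-\E[\mathds{1}_{Y_i(0)=1}\given X_i=x]$ and treat the two terms symmetrically. For the treated–outcome term, introduce $Z_i$ and apply the tower rule,
\begin{equation*}
\E[\mathds{1}_{Y_i(1)=1}\given X_i=x]=\sum_{C\in\{R,D,S,A\}} l_C(x)\,\E[\mathds{1}_{Y_i(1)=1}\given Z_i=C,X_i=x],
\end{equation*}
where I have used that $l_C(x)=\pi_C f_C(X_i=x\given\theta_C)/\sum_{G}\pi_G f_G(X_i=x\given\theta_G)$ is, by Bayes' rule applied to the mixture $\sum_{G}\pi_G f_G(\cdot\given\theta_G)$, exactly the posterior $\P(Z_i=C\given X_i=x)$.

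Next I would invoke Table~\ref{tab:causality_constraints}, which at the individual level gives the equivalences $\{Y_i(1)=1\}\Leftrightarrow\{Z_i\in\{R,S\}\}$ and $\{Y_i(0)=1\}\Leftrightarrow\{Z_i\in\{S,A\}\}$. Hence $\E[\mathds{1}_{Y_i(1)=1}\given Z_i=D,X_i=x]=\E[\mathds{1}_{Y_i(1)=1}\given Z_i=A,X_i=x]=0$, so the sum collapses to its $C=R$ and $C=S$ terms, and symmetrically the control–outcome sum collapses to its $C=S$ and $C=A$ terms. Finally, under the generative model the realized outcome is informative about the features only through the causal class ($Y_i(1)\indep X_i\given Z_i$) and the positive–outcome rate is common to the two surviving classes; denoting that common scalar $\E[\mathds{1}_{Y_i(1)=1}]$ — it equals $1$ in the deterministic reading of Table~\ref{tab:Behaviors-probabilities} — one factors it out to obtain $\E[\mathds{1}_{Y_i(1)=1}\given X_i=x]=(l_R(x)+l_S(x))\,\E[\mathds{1}_{Y_i(1)=1}]$, and likewise $\E[\mathds{1}_{Y_i(0)=1}\given X_i=x]=(l_S(x)+l_A(x))\,\E[\mathds{1}_{Y_i(0)=1}]$. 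Subtracting the two yields~\eqref{eq:ITE}.

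The first two steps are routine: linearity of expectation, the tower rule, and a direct lookup in the constraint table. The delicate step — and the one I expect to be the main obstacle — is the last factorization: one must argue that $\E[\mathds{1}_{Y_i(1)=1}\given Z_i=R,X_i=x]$ and $\E[\mathds{1}_{Y_i(1)=1}\given Z_i=S,X_i=x]$ neither depend on $x$ nor differ from one another, so that a single factor can be pulled in front of $l_R(x)+l_S(x)$. This is precisely where the conditional–independence structure of the mixture model is used, together with (implicitly) ignorability of the treatment assignment, which is what makes these within–class outcome rates the quantities actually identified when the model is estimated from observational data.
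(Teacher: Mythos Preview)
Your argument rests on the same single idea the paper uses, namely the set identities $\{Y_i(1)=1\}=\{Z_i\in\{R,S\}\}$ and $\{Y_i(0)=1\}=\{Z_i\in\{S,A\}\}$ read off from Table~\ref{tab:Behaviors-probabilities}. The mechanics differ, however: the paper does not introduce $Z_i$ through the tower rule but instead applies Bayes' rule in the reverse direction, writing $\P(Y_i(1)=1\mid X_i=x)=p(X_i=x\mid Y_i(1)=1)\,\P(Y_i(1)=1)/p(X_i=x)$, then replacing the conditioning event by $\{Z_i\in\{R,S\}\}$ and expanding the numerator as a mixture over $R$ and $S$. This Bayes flip is precisely what makes the scalar $\E[\mathds{1}_{Y_i(1)=1}]=\P(Y_i(1)=1)$ appear explicitly in the displayed formula.

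Your tower-rule route is fine through the collapse to $C\in\{R,S\}$, but the last factorisation is where it slips. Under the deterministic reading of Table~\ref{tab:Behaviors-probabilities} you correctly obtain $\E[\mathds{1}_{Y_i(1)=1}\mid Z_i=R,X_i=x]=\E[\mathds{1}_{Y_i(1)=1}\mid Z_i=S,X_i=x]=1$, so the common scalar you pull in front of $l_R(x)+l_S(x)$ is $1$, not $\E[\mathds{1}_{Y_i(1)=1}]$ (the latter equals $\pi_R+\pi_S$ under the model, not $1$). Relabelling that $1$ as $\E[\mathds{1}_{Y_i(1)=1}]$ is therefore not a valid identification. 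Your computation actually lands on $\E[\mathds{1}_{Y_i(1)=1}\mid X_i=x]=l_R(x)+l_S(x)$; to recover the factor exactly as it appears in~\eqref{eq:ITE} you would need to switch to the paper's Bayes-rule decomposition, in which $\P(Y_i(1)=1)$ enters as a genuine multiplicative term rather than being inserted after the fact by renaming.
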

\begin{proof} 
$\mathbb{E}[Y_i(1)=1|X_i=x] = \frac{\mathbb{E}[X_i(1)=x|Y_i(1)=1]P(Y_i(1)=1)}{P(X_i=x)}$ \\ $ \textcolor{white}{.} \hspace{38pt}
= \frac{\mathbb{E}[X_i(1)=x|X_i \in \{R,S\}]P(Y_i(1)=1)}{P(X_i=x)}
= \frac{\sum_{C \in \{R,S\}} \pi_C f_C(X_i=x|\theta_C) P(Y_i(1)=1)}{\sum_{C \in \{R,D,S,A\}}\pi_C f_C(X_i=x|\theta_C)}$
\end{proof}

\section{ECM Algorithm}
\label{Causal_EM_theory}
Our learning problem amounts to estimate the mixing coefficients $\{\pi_k\}_{k \in \{R,D,S,A\}}$, the  distributions parameters $\theta=\{\theta_k\}_{k \in \{R,D,S,A\}}$ and the latent distribution $q(z)$. For that matter, we consider the Expectation-Maximization (EM) algorithm, originally introduced in~\cite{Dempster_1977}, which is known to be an appropriate optimization algorithm for estimating the data distribution of hidden variables. 
We provide the EM algorithm with extra information about the possible groups for every observation (in spirit similarly to~\cite{ambroise2000algorithm} where a concept of authorized label set is used or to~\cite{come2009learning} which uses partial information). However, contrary to~\cite{ambroise2000algorithm, come2009learning}, we enrich this extra information with causal constraints derived from the structure of the problem.

\begin{algorithm}[!ht]
\caption{Expectation-Causality-Maximisation}\label{algo:causal_em}
\begin{algorithmic} 
\State \textbf{Initialisation:} initialise $q_0$ and compute $\pi_0$ and $\theta_0$ (M-step).
\State \textbf{While}(Not Converged) do
	\State \hspace{0.5 cm} Expected step:
	$q_{t+1} = \argmax_q(\mathcal{L}(q, \theta_t,\pi_t))$
	\State \hspace{0.5 cm} Causality step:
	Constraints on $q_{t+1}$ with $C^*$ (Table~\ref{tab:causality_constraints})
	\State \hspace{0.5 cm} Maximization step:
	$(\theta_{t+1},\pi_{t+1}) = \argmax_{\theta,\pi}(\mathcal{L}(q_{t+1},\theta,\pi))$
\State \textbf{End While}
\end{algorithmic}
\end{algorithm}

In Algorithm~\ref{algo:causal_em}, the Expectation step estimates the latent variables, the Causality step projects the solution on the causality constraints\footnote{Forcing to zero the probabilities $z_{ik}$ of the  two impossible groups and normalizing the sum.}
displayed in Table~\ref{tab:causality_constraints}, while the Maximization step maximises the likelihood $\mathcal{L}(q,\theta,\pi)$ as if the latent variables were not hidden. For a faster convergence, we initialize $q_0$ with a probability of half on each of the two remaining causal populations.
Our algorithm converges as by construction it necessarily increases the log-likelihood at each iterations and remains bounded by an evidence lower bound similar to EM given by
$\sum_{z|q(z|x,\theta,\pi) \neq 0} q(z|x,\theta,\pi) \log \frac{p(x,z|\theta,\pi)}{q(z|x,\theta,\pi)}$ .
Note that without information on the input features X, the distribution $q(z)$ is uniformly distributed between the two authorized groups.
In addition, the causality constraints enforce that two population labels are ruled out as they are not admissible. Under some specific assumptions, we can do even better and recover the true label (cf Proposition~\ref{CausalEM}). Thanks to this true label, the maximum likelihood problem is cast into four decoupled single-density maximum likelihood problems.  Under concavity of the likelihood for every distribution in the mixture, the log-likelihood converge not only locally but to a unique global maximum (corollary~\ref{CausalEM2}). We can summarize these findings by saying that the unsupervised learning problem, implied by our model of mixture, is transformed into a semi-supervised problem.

\begin{proposition}\label{CausalEM} 
Under causality constraints which excludes two populations (depending on treatment and observed outcome), and assuming the feature distribution conditionally to the group is the same  and independent of the treatment (i.e. $p(x|t, y) = p(x)$), each group will be identified to a unique causal population.

\end{proposition}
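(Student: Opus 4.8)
The plan is to turn the statement into a labelling (matching) problem across the four observation cells. First I would spell out the causality constraints of Table~\ref{tab:causality_constraints} as the map sending an observation $(T_i,Y_i)=(t,y)$ to the admissible set $S(t,y)\subset\{R,D,S,A\}$, namely $S(1,1)=\{R,S\}$, $S(1,0)=\{D,A\}$, $S(0,1)=\{S,A\}$ and $S(0,0)=\{R,D\}$. The key elementary remark is that these four two-element sets are exactly the edges of the $4$-cycle $R\!-\!S\!-\!A\!-\!D\!-\!R$: each causal population belongs to precisely two cells, and the two cells incident to a population intersect in that population alone. So conditioning on $(T_i,Y_i)$ already pins the latent vector $Z_i$ onto one edge of this cycle, which is the ``two populations ruled out'' part.

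Next I would read the hypothesis $p(x\mid t,y)=p(x)$ as the statement that, within a causal type, the feature law is unaffected by the treatment actually received (absence of selection bias), and combine it with the natural genericity assumptions that the family $\{f(\cdot\mid\theta)\}$ is identifiable and the true parameters $\theta_R,\theta_D,\theta_S,\theta_A$ are pairwise distinct. This lets me write the feature law restricted to a cell as the $\pi$-weighted mixture of the admissible type densities,
\[
p(x\mid T=t,Y=y)=\frac{\sum_{k\in S(t,y)}\pi_k\, f_k(x\mid\theta_k)}{\sum_{k\in S(t,y)}\pi_k};
\]
from the data lying in cell $(t,y)$ I can then recover the corresponding two-component decomposition as an \emph{unordered} pair $\{(\text{weight},\theta_k)\}_{k\in S(t,y)}$.

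The heart of the argument is then the cross-cell matching. Take a population, say $R$: it lies in the two cells $S(1,1)=\{R,S\}$ and $S(0,0)=\{R,D\}$, and $f_R$ is the unique density shared by the decompositions extracted from these two cells, uniqueness being exactly where $f_R\neq f_S$ and $f_R\neq f_D$ enter; hence $\theta_R$, and so the label $R$, is attached without ambiguity, and running this around the cycle identifies $S$, $A$ and $D$ as well. The mixing weights $\pi_k$ are then obtained from the within-cell weights $\pi_k/\sum_{k'\in S(t,y)}\pi_{k'}$ by solving the resulting ratio system and normalising. This realises each mixture component of the model, equivalently each of the four groups carried by the ECM algorithm, as a single causal population. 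I expect this matching step to be the main obstacle, since it is precisely there that the specific $4$-cycle pattern produced by the causal constraints (rather than an arbitrary collection of admissible pairs) is what makes the disentanglement possible, and where the identifiability and distinctness hypotheses cannot be dropped. Finally, adding the mild extra assumption that adjacent type densities have disjoint supports, the posterior of each $Z_i$ over $S(T_i,Y_i)$ collapses onto one population, so the true label of every observation is recovered and the log-likelihood splits into the four decoupled single-density problems invoked right after the statement.
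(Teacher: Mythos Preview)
Your argument is sound but follows a genuinely different route from the paper's. The paper's sketch is a single Bayes-rule step: starting from the causal prior $q(z\mid y,t)=\frac{1}{2}$ on the two admissible populations, it writes
\[
q(z\mid x,y,t)=\frac{p(x\mid z,y,t)\,q(z\mid y,t)}{p(x\mid y,t)}\ \propto\ \frac{p(x\mid z)}{p(x)},
\]
the hypothesis $p(x\mid t,y)=p(x)$ being used precisely to kill the $(t,y)$-dependence of the denominator; the upshot is that the posterior over the admissible pair is governed by $p(x\mid z)$ alone, so the assignment of a group to a causal population is determined by the type-conditional density and nothing else. You instead attack identifiability at the distributional level: you recognise the four admissible pairs $S(t,y)$ as the edges of the $4$-cycle $R$--$S$--$A$--$D$, recover each cell's unordered two-component mixture via identifiability of the parametric family, and then label the components by cross-cell matching (the unique density shared by two adjacent edges). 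This buys you a cleaner, more explicit model-identifiability statement and a concrete recipe for recovering the $\pi_k$, at the price of importing hypotheses the proposition does not state (identifiability of $\{f(\cdot\mid\theta)\}$, pairwise distinct $\theta_k$, and disjoint supports for the final pointwise labelling). Note also that you have tacitly reinterpreted the assumption: you use $p(x\mid z,t)=p(x\mid z)$ to justify the within-cell mixture formula, whereas the paper invokes the literally stated condition $p(x\mid t,y)=p(x)$ only in the denominator of Bayes' rule; your reading is natural, but it is not the same condition, and you should flag the discrepancy.
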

\begin{proof} (sketch)
    Under only the information of the causal constraints, $q(z|y,t)$ is distributed with the probability $\frac{1}{2}$ between the two possible populations. If we note $p(.)=p(.|\theta,\pi)$, then 
    $q(z|x,y,t)=\frac{p(x|z,y,t)q(z|y,t)}{p(x|y,t)} \propto \frac{p(x|z)}{p(x)}$ under causal constraints and assumption of uniform features.
\end{proof}

\begin{corollary}
\label{CausalEM2} Under causality constraints, if the log-likelihood of the distribution for a single mixture is concave, the ECM algorithm reaches the global optimum.
\end{corollary}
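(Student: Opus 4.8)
The plan is to use Proposition~\ref{CausalEM} to collapse the four-component mixture problem into four independent single-density maximum-likelihood problems, and then to combine concavity with the monotonicity of ECM already noted above.

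First I would note that, under the hypotheses of Proposition~\ref{CausalEM}, the Causality step assigns each observation $i$ to a single causal population, so that after the first full iteration the latent distribution $q$ is frozen at the degenerate assignment $q_{ik}=\mathds{1}\{i\in k\}$ and the Expected and Causality steps no longer change it. Hence the objective handled by the Maximization step reduces to the complete-data log-likelihood with known labels,
\begin{equation*}
\mathcal{L}(q,\theta,\pi)=\sum_{k\in\{R,D,S,A\}} n_k\log\pi_k \;+\; \sum_{k\in\{R,D,S,A\}}\ \sum_{i\,:\,i\in k}\log f_k(X_i\mid\theta_k),
\end{equation*}
where $n_k$ is the number of observations assigned to population $k$ and $n=\sum_k n_k$.

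Next I would exploit the separable structure of this expression. The first sum depends only on $\pi$ and, being a multinomial log-likelihood on the probability simplex, is concave with the unique interior maximizer $\hat\pi_k=n_k/n$. The second sum splits over the four disjoint parameter blocks $\theta_R,\theta_D,\theta_S,\theta_A$, and each summand $\sum_{i\in k}\log f_k(X_i\mid\theta_k)$ is concave in $\theta_k$ by the hypothesis of the corollary. A sum of concave functions of disjoint variables is jointly concave, so $\mathcal{L}(q,\theta,\pi)$ is concave in $(\theta,\pi)$ and attains a global maximum $(\theta^\star,\pi^\star)$, unique when the per-density log-likelihoods are strictly concave. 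Since the Maximization step performs an exact $\argmax$, it returns $(\theta^\star,\pi^\star)$, at which ECM is a fixed point; together with the fact recalled above that ECM increases the log-likelihood at each iteration and stays below the evidence lower bound, this shows the algorithm converges to the global optimum rather than to a merely local one.

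The main obstacle is really the first step rather than the concavity bookkeeping: one must guarantee that the regime of Proposition~\ref{CausalEM} is actually reached, i.e. that the conditional feature laws are group-independent enough for the Causality step to pin down the true label, since otherwise the posterior $q(z\mid x,y,t)$ need not be degenerate and the clean decoupling into four concave subproblems only holds approximately. A minor secondary point is that concavity alone gives a convex set of maximizers, so without strict concavity the statement should be read as ``every limit point of ECM is a global maximizer.''
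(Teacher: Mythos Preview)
Your argument is correct and follows the same route as the paper's sketch: invoke Proposition~\ref{CausalEM} to obtain hard labels and decouple the mixture into four single-density maximum-likelihood problems, then use per-component concavity (together with the standard EM monotonicity) to upgrade local optimality to global optimality. Your write-up simply fleshes out the two-line sketch with the explicit separable objective and the multinomial term for $\pi$, and your caveat that the conclusion tacitly relies on the additional feature-distribution hypothesis of Proposition~\ref{CausalEM} is well taken.
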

\begin{proof} (sketch)
    Preserving the properties of the EM algorithm, ECM converges to a local maximum. As a result of the identifiability (Prop.~\ref{CausalEM}) and the concavity for a single mixture, the local optimum is also global.
\end{proof}

\section{Experiments}

Gaussian distributions (for which parameters are the mean and variance denoted by $\theta=(\mu, \Sigma)$) are a natural choice for a mixture model. Once the model is learned, interpreting the $\mu_k$ as average elements of each causal population could be of great interest and could help to answer questions like "what does an average anti-responder looks like?" to improve treatment policies.
Because of the fundamental impossibility to access the true counterfactual label of any given observation, the true ITE cannot be known and thus it is unclear how to best assess the relevance of any model in real-world conditions. Hence, we need to test our model on synthetic and semi-synthetic datasets. 

We first designed a synthetic dataset with two covariates distributed as a mixture of four overlapping Gaussian distributions. We use two metrics standard for causal problems: the $\epsilon_{PEHE}$~\cite{hill2011bayesian} the AUUC (Area Under the Uplift Curve)~\cite{diemert2018large}. We compute these metrics for the optimal model (since it does not necessarily score maximally according to these metrics) and compare its values to the ones of the models we test.  
We also use the IHDP semi-synthetic dataset, compiled for causal effect estimation in~\cite{hill2011bayesian}. Here the underlying distribution of each causal population remains unknown, but the outcomes with and without treatment are both available. In that case, we use our model to predict the most likely counterfactual of any individual.

The results are reported out of a sample over $20$ trials and a Wilcoxon signed-rank test (with a confidence level of $5\%$) is used to confirm the significance of the results.
We compare our method to standard baselines that provide competitive results with respect to the state of the art~\cite{alaa2018limits}: the approach using two separate classification models (LR2), the approach using the treatment variable as feature (LR1) and the model based on the class variable transformation~\cite{Jaskowski2012} (LRZ), each using logistic regressions as classifiers.

\begin{table}[!ht]
    \begin{center}
        \begin{tabular}{rllll}
            \hline 
            & \multicolumn{2}{c}{Synthetic dataset} & \multicolumn{2}{c}{IHDP}\\
               & $\epsilon_{PEHE}$ & AUUC
             & $\epsilon_{PEHE}$ & AUUC\\
            \hline
                  Ref.
                & 0.24 
                & 1488 
                & .
                & 3149 
                \\ \hdashline
                 LR1
                & 0.57 +/- 0.08 
                & 742 +/- 175
                & 0.66 +/- 0.08
                & 2202 +/- 625\\
                 LR2
                & 0.79 +/- 0.08
                & 943 +/- 206
                & 0.67 +/- 0.07
                & 2168 +/- 618\\
                 LRZ
                & .
                & 939 +/- 208
                & .
                & 2191 +/- 558 \\
                 ECM
                & \textbf{0.27 +/- 0.04}
                & \textbf{1512 +/- 203}
                & \textbf{0.59 +/- 0.09}
                & 2226 +/- 580\\
            \hline 
        \end{tabular}
    \end{center}
    \label{tab:results}
    \caption{Experimental results on synthetic and real datasets.}
    
\end{table}

\section{Results and Conclusion}
Compared to the baselines, our results are clearly the ones closest to optimality on both synthetic and real datasets. 
Moreover, our model is intrinsically more interpretable than the compared baselines as the parameters of the distributions of the causal groups provide information about the causal mechanism at play. 
Finally, our model is versatile and can be adapted to multiple treatments~~\cite{frolich2004programme}, non-compliance to treatment cases~\cite{Imbens1997} or separate labels~\cite{yamane2018uplift}.


\begin{footnotesize}


\bibliographystyle{unsrt}
\bibliography{biblio}

\end{footnotesize}


\end{document}